\newcommand*\patchAmsMathEnvironmentForLineno[1]{%
      \expandafter\let\csname old#1\expandafter\endcsname\csname #1\endcsname
      \expandafter\let\csname oldend#1\expandafter\endcsname\csname end#1\endcsname
      \renewenvironment{#1}%
         {\linenomath\csname old#1\endcsname}%
         {\csname oldend#1\endcsname\endlinenomath}}%
    \newcommand*\patchBothAmsMathEnvironmentsForLineno[1]{%
      \patchAmsMathEnvironmentForLineno{#1}%
      \patchAmsMathEnvironmentForLineno{#1*}}%
\newcommand{\at}{\makeatletter @\makeatother}
\DeclareMathOperator*{\argmin}{\arg\!\min}
\newcommand{\bi}{\begin{itemize}}
\newcommand{\ei}{\end{itemize}}
\newcommand{\vo}[1]{\boldsymbol{#1}}
\newcommand{\x}{\vo{x}}
\newcommand{\y}{\vo{y}}
\newcommand{\trace}[1]{\mathbf{tr}\left(#1\right)}
\newtheorem{theorem}{Theorem}
\definecolor{darkgreen}{RGB}{0,128,0}
\NewDocumentCommand \vTh {} {\boldsymbol{\theta}}
\NewDocumentCommand \vMu {} {\boldsymbol{\mu}}
\NewDocumentCommand \vMuP {} {\boldsymbol{\mu}_p}
\NewDocumentCommand \vMuQ {} {\boldsymbol{\mu}_q}
\NewDocumentCommand \vMuKF {} {\boldsymbol{\mu}^{\text{KF}}}
\NewDocumentCommand \vXBar {} {\bar{\boldsymbol{x}}}
\title{\LARGE \bf
Variational Kalman Filtering with H$_{\infty}$-Based Correction for Robust Bayesian Learning in High Dimensions}
\author{Niladri Das, Jed A. Duersch, and Thomas A. Catanach 
\thanks{ Niladri Das (corresponding author, ndas\at sandia.gov),  Jed Duersch (jaduers\at sandia.gov), and Thomas A. Catanach (tacatan\at sandia.gov) are with Sandia National Laboratories, Livermore, CA 94550, USA.  Sandia National Laboratories is a multimission laboratory managed and operated by National Technology \& Engineering Solutions of Sandia, LLC, a wholly owned subsidiary of Honeywell International Inc., for the U.S. Department of Energy’s National Nuclear Security Administration under contract DE-NA0003525. This paper describes objective technical results and analysis. Any subjective views or opinions that might be expressed in the paper do not necessarily represent the views of the U.S. Department of Energy or the United States Government.}}
\begin{document}

\include{def}
\def\dispmuskip{\thinmuskip= 3mu plus 0mu minus 2mu \medmuskip=  4mu plus 2mu minus 2mu \thickmuskip=5mu plus 5mu minus 2mu}
\def\textmuskip{\thinmuskip= 0mu                    \medmuskip=  1mu plus 1mu minus 1mu \thickmuskip=2mu plus 3mu minus 1mu}
\def\beq{\dispmuskip\begin{equation}}    \def\eeq{\end{equation}\textmuskip}
\def\beqn{\dispmuskip\begin{displaymath}}\def\eeqn{\end{displaymath}\textmuskip}
\def\bea{\dispmuskip\begin{eqnarray}}    \def\eea{\end{eqnarray}\textmuskip}
\def\bean{\dispmuskip\begin{eqnarray*}}  \def\eean{\end{eqnarray*}\textmuskip}


\maketitle
\pagestyle{empty}

\begin{abstract}
In this paper, we address the problem of convergence of sequential variational inference filter (VIF) through the application of a robust variational objective and H$_{\infty}$-norm based correction for a linear Gaussian system. As the dimension of state or parameter space grows, performing the full Kalman update with the dense covariance matrix for a large scale system requires increased storage and computational complexity, making it impractical. The VIF approach, based on mean-field Gaussian variational inference, reduces this burden through the variational approximation to the covariance usually in the form of a diagonal covariance approximation. The challenge is to retain convergence and correct for biases introduced by the sequential VIF steps. We desire a framework that improves feasibility while still maintaining reasonable proximity to the optimal Kalman filter as data is assimilated. To accomplish this goal, a H$_{\infty}$-norm based optimization perturbs the VIF covariance matrix to improve robustness. This yields a novel VIF-H$_{\infty}$ recursion that employs consecutive variational inference and H$_{\infty}$ based optimization steps. We explore the development of this method and investigate a numerical example to illustrate the effectiveness of the proposed filter.
\end{abstract}
\section{INTRODUCTION}
The sequential estimation of a system's state is essential for many problems in science and engineering. Filtering allow sequential observations to be integrated into a time-varying estimate of a system's state. A large number of such systems can be modelled as a linear Gaussian system. The Kalman filter (KF), derived from the optimal Bayesian filter, is used for state/parameter estimation of such problems. As the dimension of the system becomes very large, the standard formulations of KF become computationally intractable due to matrix storage and computation requirements. We anticipate this task becoming increasingly problematic  as traditional data assimilation methods, like Kalman filters, are used to sequentially update very high-dimensional machine learning models, as is the case in learning for deep neural networks \cite{kovachki2019ensemble}. Therefore, the main challenge is to develop an approximating algorithm that is robust enough to asymptotically converge to the data generating process. 

There are variants of KF that are computationally efficient for high dimensional problems. The reduced rank Kalman Filter \cite{Cane_1996,Dee_1991,Voutilainen_2007} searches for lower dimensional subspace to operate in. However, their performance assumes there exists a low enough dimensional subspace capturing the bulk of the system's structure while still computationally tractable. This often does not hold in very high dimensions. Further, these methods suffer from the lack of structured mechanism to find the effective lower dimensional projection operator. Added to that, such operators are typically fixed in time, unable to switch to a different optimal lower dimensional space. Direct approximation techniques \cite{Auvinen_2009,auvinen2009large,Bardsley_2011} have properties that are favourable in high dimensions. However, the previously proposed methods provide no guarantee that the approximated covariance matrix remains non-negative definite.

In this paper we propose a variational formulation aided by the H$_{\infty}$ filter to avoid the complexity of storage and computation with large dense state error covariance matrices. Variational methods, which optimize an approximating distribution to a Bayesian posterior, have become critical in large scale machine learning because they can handle both high dimensional models and large amounts of data \cite{zhang2018advances, blei2017variational}. Variational methods often approximate the posterior distribution with a multivariate normal distribution with diagonal covariance, making it tractable to store and manipulate in high dimensions \cite{thomas2018gaussian}. We explore combining the robustness of H$_{\infty}$ and the tractability of variational inference to provide an algorithm that is efficient and can handle uncertainty due to approximations.

The outline is as follows: First in Section II we introduce the model along with traditional KF, then in Sections III and IV we discuss the VI and H$_{\infty}$ filter respectively.  In Section V we derive our proposed augmented filter. Next, in the Section VI we demonstrate the effectiveness of our technique and finally in Section VII, we provide a discussion of the limitations and possible future directions.

\section{The Finite-Dimensional Linear-Gaussian Model and Kalman Filter}
Many large scale systems, where the problem entails state or parameter estimation, can be modeled or approximated by a linear system with additive Gaussian noise. 
We enumerate observations and update by the index $t$, so that $\vo{x}_t \in \mathbb{R}^n$ represents input variables that predicts outputs $\vo{y}_t\in \mathbb{R}^m$. We are particularly interested in the case where the dimension of inputs is much larger than dimension of outputs, $m\ll n$. We are going to focus on the simplest case where $m = 1$, so our measurement and model update equations becomes,
\begin{linenomath}
\begin{align}
    \y_t &= \x_t^T\vTh_t + \vo{\eta}_t \label{meas}\\
    \vTh_{t+1} &= \vo{A}_{t}\vTh_{t} + \vo{w}_{t} \label{dyn}
\end{align}
\label{eq:linear_gaussian}
\end{linenomath}
where $\vTh_{t}\in \mathbb{R}^n$ is the model parameter and/or states, that is to be estimated. The variables $\vo{w}_{t}$ and $\vo{\eta}_t$ are assumed to be i.i.d. zero mean Gaussian noise with covariance matrix $\vo{Q}$ and $\vo{R}$ respectively. The matrix $\vo{A}_{t}$ is identity and $\vo{w}_t = 0$ for all $i$ when we perform parameter estimation. We assume a prior on $\vTh\sim \mathcal{N}(\vo{0},\vo{\Sigma}_{\vTh_0})$.


Under the model \eqref{eq:linear_gaussian}, the optimal update scheme based on Bayesian inference is to use the Kalman Filter. Given a current estimate $\vTh_{t-1} \sim \mathcal{N} \left (\vMu_{t-1}, \vo{P}_{t-1} \right )$, the KF defines how to update our estimate given $\vo{A}_{t}$, $\vo{x}_t$, and $\vo{y}_i$ to arrive at $\vTh_{i} \sim \mathcal{N} \left (\vMuKF_t, \vo{P}^{\text{KF}}_t \right )$. If we have a set of efficient covariance matrix approximations, e.g. diagonal matrices $\vo{\mathcal{P}}$, then the approximate KF problem relies on finding a $\vo{P}_t \in \vo{\mathcal{P}}$ that captures information about $\vo{P}^{\text{KF}}_t$ without introducing bias (difference between expected estimate and the true value) in the filter. A motivating approach is to choose $\vo{P}_i$ such that $\vo{P}^{\text{KF}}_t \preccurlyeq \vo{P}_t \preccurlyeq \vo{P}_{t-1}$. $\vo{P}_t \preccurlyeq \vo{P}_{t-1}$ implies information has been learned while $\vo{P}^{\text{KF}}_t \preccurlyeq \vo{P}_t$ implies that the inference is conservative. However, this approach is too conservative to enable efficient learning in the case of diagonal matrices. Finding a satisfactory $\vo{P}_t$ is near impossible because the principle axes of $\vo{P}^{\text{KF}}_t$ almost never align with the standard basis so $\vo{P}_{t-1} = \vo{P}_t$ (Theorem \ref{noSDP}). This motivates the need for a less conservative yet robust approach.

\begin{theorem}
For parameter estimation (e.g. $A_t = \mathbb{I}$ and $w_t = 0$) with a random input vector $\vo{x}_t \sim \mathcal{N} \left (\bar{x},\Sigma_x\right)$, the constraint $\vo{P}^{\text{KF}}_t \preccurlyeq \vo{P}_t \preccurlyeq \vo{P}_{t-1} \xRightarrow{a.s.} \vo{P}_t = \vo{P}_{t-1}$, when $\vo{P}_t$, $\vo{P}_{t-1}$ are diagonal.
\label{noSDP}
\end{theorem}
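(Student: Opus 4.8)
The plan is to reduce the semidefinite sandwich to a purely linear-algebraic statement about how a rank-one positive semidefinite matrix can dominate a diagonal one, and then to kill the exceptional case with a measure-zero argument. First I would write out the Kalman covariance update for the scalar-measurement model \eqref{eq:linear_gaussian} with $\vo{A}_t=\mathbb{I}$, $\vo{w}_t=0$ and measurement row $\x_t^T$. Setting $s_t := \x_t^T \vo{P}_{t-1}\x_t + \vo{R} > 0$, the posterior covariance is the rank-one downdate
\begin{equation}
\vo{P}^{\text{KF}}_t = \vo{P}_{t-1} - \frac{\vo{P}_{t-1}\x_t\x_t^T\vo{P}_{t-1}}{s_t},
\end{equation}
so that $\vo{P}_{t-1}-\vo{P}^{\text{KF}}_t = \vo{v}\vo{v}^T$ with $\vo{v} := \vo{P}_{t-1}\x_t/\sqrt{s_t}$, a single rank-one positive semidefinite term.

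Next I would encode the two-sided constraint through $\vo{D} := \vo{P}_{t-1}-\vo{P}_t$, which is diagonal because $\vo{P}_t$ and $\vo{P}_{t-1}$ are. The right inequality $\vo{P}_t \preccurlyeq \vo{P}_{t-1}$ is exactly $\vo{D}\succcurlyeq 0$, while the left inequality $\vo{P}^{\text{KF}}_t \preccurlyeq \vo{P}_t$ rearranges to $\vo{v}\vo{v}^T - \vo{D} \succcurlyeq 0$. Thus the admissibility of a diagonal $\vo{P}_t$ lying between $\vo{P}^{\text{KF}}_t$ and $\vo{P}_{t-1}$ is equivalent to $0 \preccurlyeq \vo{D} \preccurlyeq \vo{v}\vo{v}^T$, and the theorem amounts to showing that almost surely the only such $\vo{D}$ is $\vo{D}=0$.

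The heart of the argument, and the step I expect to be the main obstacle, is the claim that a diagonal $\vo{D}\succcurlyeq 0$ dominated by the rank-one matrix $\vo{v}\vo{v}^T$ must vanish unless $\vo{v}$ is aligned with a coordinate axis. I would prove this via the null space of $\vo{v}\vo{v}^T$: for any $\vo{u}$ with $\vo{v}^T\vo{u}=0$ one has $\vo{u}^T(\vo{v}\vo{v}^T-\vo{D})\vo{u} = -\sum_i \vo{D}_{ii}\vo{u}_i^2 \geq 0$, which forces $\vo{u}_i = 0$ at every index $i$ with $\vo{D}_{ii}>0$. Taking $\vo{u}$ to be the projection of the standard basis vector $\vo{e}_i$ onto $\vo{v}^\perp$, whose $i$-th entry is $1-\vo{v}_i^2/\|\vo{v}\|^2$, then shows that $\vo{D}_{ii}>0$ can hold only if $\vo{v}$ is parallel to $\vo{e}_i$; since $\vo{v}\neq 0$ this happens for at most one index, so $\vo{D}$ is supported on a single coordinate $k$ with $\vo{v}\parallel \vo{e}_k$.

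Finally I would transport $\vo{v}\parallel\vo{e}_k$ back to the data. Because $\vo{P}_{t-1}$ is diagonal and nonsingular, $\vo{v} = \vo{P}_{t-1}\x_t/\sqrt{s_t}$ is parallel to $\vo{e}_k$ only when $\x_t$ itself is parallel to $\vo{e}_k$, i.e. $\x_t$ has a single nonzero entry. The set of such vectors is a finite union of coordinate lines, a Lebesgue-null set in $\mathbb{R}^n$ for $n\geq 2$; since $\x_t\sim\mathcal{N}(\bar{x},\Sigma_x)$ admits a density for nondegenerate $\Sigma_x$, this event has probability zero (the same density argument also gives $\vo{v}\neq 0$ almost surely, as $\x_t=0$ is null). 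Hence almost surely no diagonal entry of $\vo{D}$ is positive, $\vo{D}=0$, and $\vo{P}_t = \vo{P}_{t-1}$. The only assumptions I would need to flag are that $\vo{P}_{t-1}$ is nonsingular and $\Sigma_x$ nondegenerate, both of which hold generically.
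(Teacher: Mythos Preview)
Your proof is correct and follows essentially the same approach as the paper: write the Kalman update as a rank-one downdate $\vo{P}_{t-1}-\vo{P}^{\text{KF}}_t=\vo{v}\vo{v}^T$, test the inequality $\vo{D}\preccurlyeq\vo{v}\vo{v}^T$ with vectors in the null space $\vo{v}^\perp$ to force the diagonal entries of $\vo{D}$ to vanish, and finish with a measure-zero argument on $\x_t$. Your version is in fact more explicit than the paper's, which simply asserts that ``the resulting $\vo{u}_\perp$ will almost surely be all non-zero''; by projecting each $\vo{e}_i$ onto $\vo{v}^\perp$ you make precise exactly when a nonzero diagonal entry could survive (namely $\vo{v}\parallel\vo{e}_i$) and why that event is null.
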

\begin{proof}
Let $\vo{d}^{t-1}$, $\vo{d}^{t}$ be the diagonals of $\vo{P}_{t-1}$ and $\vo{P}_{t}$. Then $\vo{P}_t \preccurlyeq \vo{P}_{t-1} \implies d^{t}_j \leq d^{t-1}_j \forall j$. For the KF, $\vo{P}^{\text{KF}}_t = \vo{P}_{t-1} - \frac{\vo{P}_{t-1} \vo{x}^T_{t} \vo{x}_{t}  \vo{P}_{t-1}}{\vo{x}_{t}\vo{P}_{t-1} \vo{x}^T_{t} +R_{t}}$. $\vo{P}^{\text{KF}}_t \preccurlyeq \vo{P}_t \Leftrightarrow \vo{u} \left (\vo{P}_{t-1} - \vo{P}_{t} \right ) \vo{u}^T \leq \vo{u} \frac{\vo{P}_{t-1} x^T_{t} x_{t}  \vo{P}_{t-1}}{x_{t}\vo{P}_{t-1} x^T_{t} +R_{t}} \vo{u}^T$ $\forall \vo{u} \in \mathbb{R}^n$. Since $\vo{P}_{t-1}$ and $\vo{P}_{t}$ are diagonal, the LHS is $\sum_j u_j^2 (d^{t-1}_j - d^{t}_j)$. Because $\vo{P}_{t-1} \vo{x}^T_{t} \vo{x}_{t}  \vo{P}_{t-1}$ is a rank-1 matrix, $\exists \vo{u}_{\perp} \in Null \left (\vo{P}_{t-1} \vo{x}^T_{t} \vo{x}_{t}  \vo{P}_{t-1} \right)$ s.t. the RHS is 0. When this holds then, $\sum_j u_j^2 ({d}^{t-1}_j - {d}^{t}_j) \leq 0$. Since both $u_j^2$ and $({d}^{t-1}_j - {d}^{t}_j)$ are always non-negative, then either $u_j^2$ or $({d}^{t-1}_j - {d}^{t}_j)$ must be $0$ $\forall j$. Since $\vo{x}_t$ are randomly distributed then the resulting $u_{\perp}$ will almost surely be all non-zero. Therefore a.s.,$({d}^{t-1}_j = {d}^{t}_j) \implies \vo{P}_{t-1} = \vo{P}_{t}$.
\end{proof}

\section{Discrete-time VI Filter}
Variational Inference(VI) is an approximate inference method in Bayesian statistics. Given a model we infer its posterior density by updating our prior belief about the model with observations. Solving for the exact posterior $p(\vTh \mid \vo{y})$ is often impossible, necessitating the use of different types of approximations such as approximating it with samples or a more tractable distribution. In VI this approximation is done with a distribution $q(\vTh \mid \vo{\phi})$ which comes from a set of possible distributions parameterized by $\vo{\phi}$.

For linear Gaussian models, the analytic posterior maybe constructed; however, the cost of storing $\vo{P}_t$ and potentially computing its inverse $\vo{P}_t^{-1}$ for subsequent analysis has motivated us to simplify the structure of the Gaussian distribution. We assume a variational structure such that the individual $\vTh_t$ elements are independent. This method is called the mean-field approximation \cite{blei2017variational}, which imposes a diagonal structure on $\vo{P}_t=\text{diag}(\vo{d}_t)$. This is much more tractable in high dimensional problems. In this case the variational distribution is parameterized by the mean and diagonal vector $\vo{d}_t$.

Traditional VI seeks to minimize the Kullback–Leibler (KL) divergence between the approximating variational distribution and the target posterior. For two distribution $p \left ( \vTh \right )$ and $q \left ( \vTh \right )$ the KL divergence is given by,

\begin{equation}
   \text{D}_{\text{KL}}(p(\vTh)||q(\vTh)) = \int p(\vTh) \log \frac{p \left ( \vTh \right )}{q \left ( \vTh \right )} d\vTh.
\end{equation}
The KL divergence stems from information theory and is similar to a distance between the distributions; it is nonnegative and only zero if the distributions are the same almost everywhere. Unlike a distance metric, however, it is not symmetric nor does it obey the triangle inequality. If both $p$ and $q$ are n-dimensional multivariate Gaussian distributions, with means and covariances $(\vMuP,\vo{\Sigma}_p)$ and $(\vMuQ,\vo{\Sigma}_q)$ respectively, then the KL-divergence becomes
\begin{align}
   \text{D}_{\text{KL}}(p(\vTh)||q(\vTh)) &= \nonumber \\\frac{1}{2} \biggl ( \trace{\vo{\Sigma}_q^{-1} \vo{\Sigma}_p} &- n
   +||\vMuP -\vMuQ||^2_{\vo{\Sigma}_q^{-1}} + \log \frac{|\vo{\Sigma}_q|}{|\vo{\Sigma}_p|} \biggr ).
\label{eq:guass_kl}
\end{align}

When using the KL divergence for VI there is a choice to minimize either direction of the KL divergence e.g. $\text{D}_{\text{KL}}(p(\vTh \mid \vo{y})||q(\vTh \mid \vo{\phi}))$ or $\text{D}_{\text{KL}}(q(\vTh \mid \vo{\phi})||p(\vTh \mid \vo{y}))$. Choosing $\text{D}_{\text{KL}}(q(\vTh \mid \vo{\phi})||p(\vTh \mid \vo{y}))$ is common because it does not require sampling or otherwise manipulating the posterior $p(\vTh \mid \vo{y})$. This objective is equivalent to maximizing the evidence lower bound (ELBO) so is often referred to as ELBO optimization. For Gaussian VI with mean-field approximation we optimize this KL divergence over the positive space of $\vo{d}_t$.
\begin{align}
    \mathcal{L}(\vo{d}_t,\vo{y}_t,\vo{x}_t) = \text{D}_{\text{KL}}(q(\vTh \mid \vo{d}_t)||p(\vTh_t \mid \vo{y}_t))
\end{align}
where $q(\vTh \mid \vo{d}_t)$ is the Gaussian VI distribution parameterized by $\vo{d}_t$. The $p(\vTh_t \mid \vo{y}_t)$ is the posterior distribution of the state given the observation. We assume that the $p,q$ have the same mean so it is not a parameter of the optimization because it can be easily shown that the optimal mean is the posterior mean. Therefore, we would use the standard Kalman update to update the mean as data is assimilated. 

Similarly, we could also choose the other direction of the KL divergence, $\text{D}_{\text{KL}}(p(\vTh \mid \vo{y})||q(\vTh \mid \vo{\phi}))$. Within VI this is know as Expectation Propagation (EP). While EP is more challenging, it may be preferred when tractable because it is well known that the ELBO formulation of VI generally underestimates uncertainty \cite{blei2017variational} compared to EP. Therefore sequential ELBO-VI is more likely to lead to bias compared to EP-VI. Fortunately, when the dimension of the streaming data is small, EP can be computed analytically because the posterior is just a low-rank update to the mean and covariance. Therefore, for the remainder of this work we will focus on the EP formulation. 

\begin{theorem}[Optimal $P_{\text{VI-EP}}$]
Suppose the target posterior $p\left(\vTh \mid \vo{y} \right) \sim \mathcal{N} \left (\vMuKF, \vo{P}_{\text{KF}} \right)$ and the variational distribution is $q(\vTh \mid \vo{\phi}) \sim \mathcal{N} \left(\vMu, \vo{P}(\vo{d}) \right)$, where $\vo{P}(\vo{d})$ is constrained to be a diagonal matrix with diagonal elements $d_i$. Then the EP optimal $\vo{\mu}_{\text{VI-EP}}$ and $\vo{d}_{\text{VI-EP}}$ are given by
\begin{align}
\left( \vo{\mu}_{\text{VI-EP}}, \vo{d}_{\text{VI-EP}} \right) &= \argmin_{(\vo{\mu}, \vo{d})} \text{D}_{\text{KL}}(p(\vTh \mid \vo{y})||q(\vTh \mid \vo{\mu}, \vo{d}))\nonumber\\
&= \left( \vMuKF, \text{diag}(\vo{P}_{\text{KF}}) \right)
\end{align}
\end{theorem}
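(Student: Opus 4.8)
The plan is to substitute the two Gaussians directly into the closed-form expression \eqref{eq:guass_kl} and exploit the diagonal structure of $\vo{P}(\vo{d})$ to decouple the objective coordinate by coordinate. Identifying $p$ with $(\vMuKF, \vo{P}_{\text{KF}})$ and $q$ with $(\vMu, \vo{P}(\vo{d}))$, and writing $\vo{P}(\vo{d}) = \text{diag}(\vo{d})$ so that $\vo{P}(\vo{d})^{-1} = \text{diag}(1/d_i)$, every term in \eqref{eq:guass_kl} reduces to a sum over coordinates. Letting $P_{ii}$ denote the $i$-th diagonal entry of $\vo{P}_{\text{KF}}$ and $\mu^{\text{KF}}_i, \mu_i$ the mean components, the trace term becomes $\sum_i P_{ii}/d_i$, the Mahalanobis term becomes $\sum_i (\mu^{\text{KF}}_i - \mu_i)^2 / d_i$, the log-determinant of $\vo{P}(\vo{d})$ becomes $\sum_i \log d_i$, while $\log|\vo{P}_{\text{KF}}|$ and the constant $-n$ do not depend on the optimization variables. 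This separation is the structural fact that makes the whole argument routine.

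First I would optimize over the mean. The only place $\vMu$ enters is $\sum_i (\mu^{\text{KF}}_i - \mu_i)^2 / d_i$, a sum of nonnegative quantities since each $d_i > 0$, which attains its minimum value of zero exactly when $\mu_i = \mu^{\text{KF}}_i$ for every $i$. Hence $\vo{\mu}_{\text{VI-EP}} = \vMuKF$ independently of $\vo{d}$, and the mean and covariance optimizations genuinely decouple. After fixing $\vMu = \vMuKF$, the objective splits into $n$ independent one-dimensional problems of minimizing $f_i(d_i) = P_{ii}/d_i + \log d_i$ over $d_i > 0$; setting $f_i'(d_i) = -P_{ii}/d_i^2 + 1/d_i = 0$ gives the unique stationary point $d_i = P_{ii}$, i.e. $\vo{d}_{\text{VI-EP}} = \text{diag}(\vo{P}_{\text{KF}})$.

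The step I would be most careful about is confirming that this stationary point is the \emph{global} minimizer rather than merely a critical point, since $f_i$ is not globally convex on $(0,\infty)$. The clean way around this is a coercivity argument: $f_i(d_i) \to +\infty$ both as $d_i \to 0^+$ and as $d_i \to +\infty$, so the unique interior critical point must be the global minimum, consistent with $f_i''(P_{ii}) = 1/P_{ii}^2 > 0$. Combining the two optimizations yields $(\vo{\mu}_{\text{VI-EP}}, \vo{d}_{\text{VI-EP}}) = (\vMuKF, \text{diag}(\vo{P}_{\text{KF}}))$, which is precisely the moment-matching solution one expects from minimizing the forward (EP) KL divergence over a Gaussian family: the mean is matched exactly and the variational variances match the diagonal of the true posterior covariance.
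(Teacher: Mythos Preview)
Your proof is correct and follows essentially the same approach as the paper: plug into the closed-form Gaussian KL \eqref{eq:guass_kl}, eliminate the Mahalanobis term by matching means, and set the derivative with respect to each $d_i$ to zero. Your version is more careful than the paper's two-line sketch in that you explicitly verify the stationary point $d_i = P_{ii}$ is the global minimizer via coercivity of $f_i(d_i) = P_{ii}/d_i + \log d_i$, which the paper omits.
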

\begin{proof}
The optimal $\vo{\mu}$ is found by setting the $\vo{\mu}$'s to be equal and eliminating the norm term in \eqref{eq:guass_kl}. The optimal $\vo{d}$ is found by setting the derivative of $\eqref{eq:guass_kl}$ wrt $d_i$ to zero and solving for $d_i$.
\end{proof}

\begin{figure}
   \centering
   \includegraphics[width=0.5\textwidth]{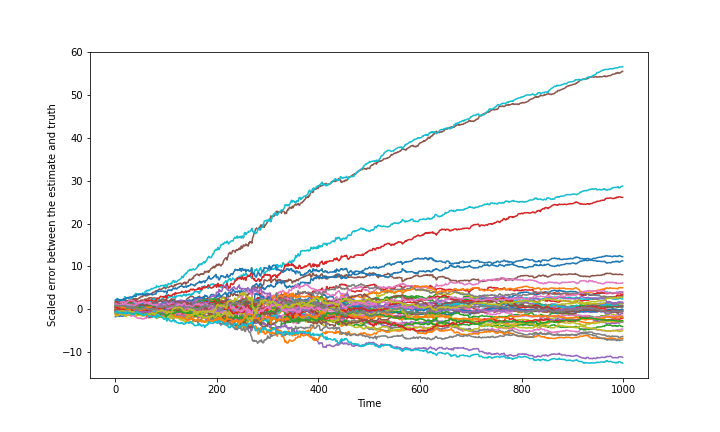}
   \caption{Plot showing the scaled estimation error response of the VI-EP filter with dimension of $\vTh$, $n=50$ and dimension of $\vo{y}$, $m=1$. The scaling is wrt to the standard deviation from the covariance matrix. We see some state are not converging because their error relative to the standard deviation is growing and approaches 60$\sigma$ by the end of the simulation. This indicates that not only are they being poorly estimated but that their uncertainty is very poorly quantified.}
   \label{VIEP}
\end{figure}

Even EP-VI filter suffers from convergence issue (having bias) when the $\vTh$ dimension is considerably higher than the observation dimension (Fig. \ref{VIEP}), unlike in \cite{thomas2018gaussian} where both dimensions are comparable. This is because learning sequentially from a limited number of observations requires capturing parameter correlation which is cannot be done with a diagonal matrix. Therefore, the covariance update must be more conservative than the EP update.

With the same diagonal structure of $\vo{P}_t$ we considered the general family of f-divergence to measure the difference between $p$ and $q$. 
In \cite{e22010108}, the notions of information density $\mathbb{D}(q|p)$ and $\mathbb{L}^{r}$ information psuedometrics,
\begin{align}
    \mathbb{L}^r = \Big(\int p(\vTh) \Big|\log \frac{p \left ( \vTh \right )}{q \left ( \vTh \right )}\Big|^r d\vTh\Big)^{1/r}; \quad r\geq 1
\end{align}
enables us to investigate the performance of this information theory based f-divergence measure to identify the VI distribution. In fig.~\ref{inforpseudo} we see that optimizing the information pseudometric for $r=2$, results in an approximation that captures more of the high posterior probability domain, which dominates the objective. As a consequence, it also includes regions with lower posterior probability than traditional VI EP or the ELBO optimization.  
\begin{figure}
   \centering
   \includegraphics[width=0.45\textwidth]{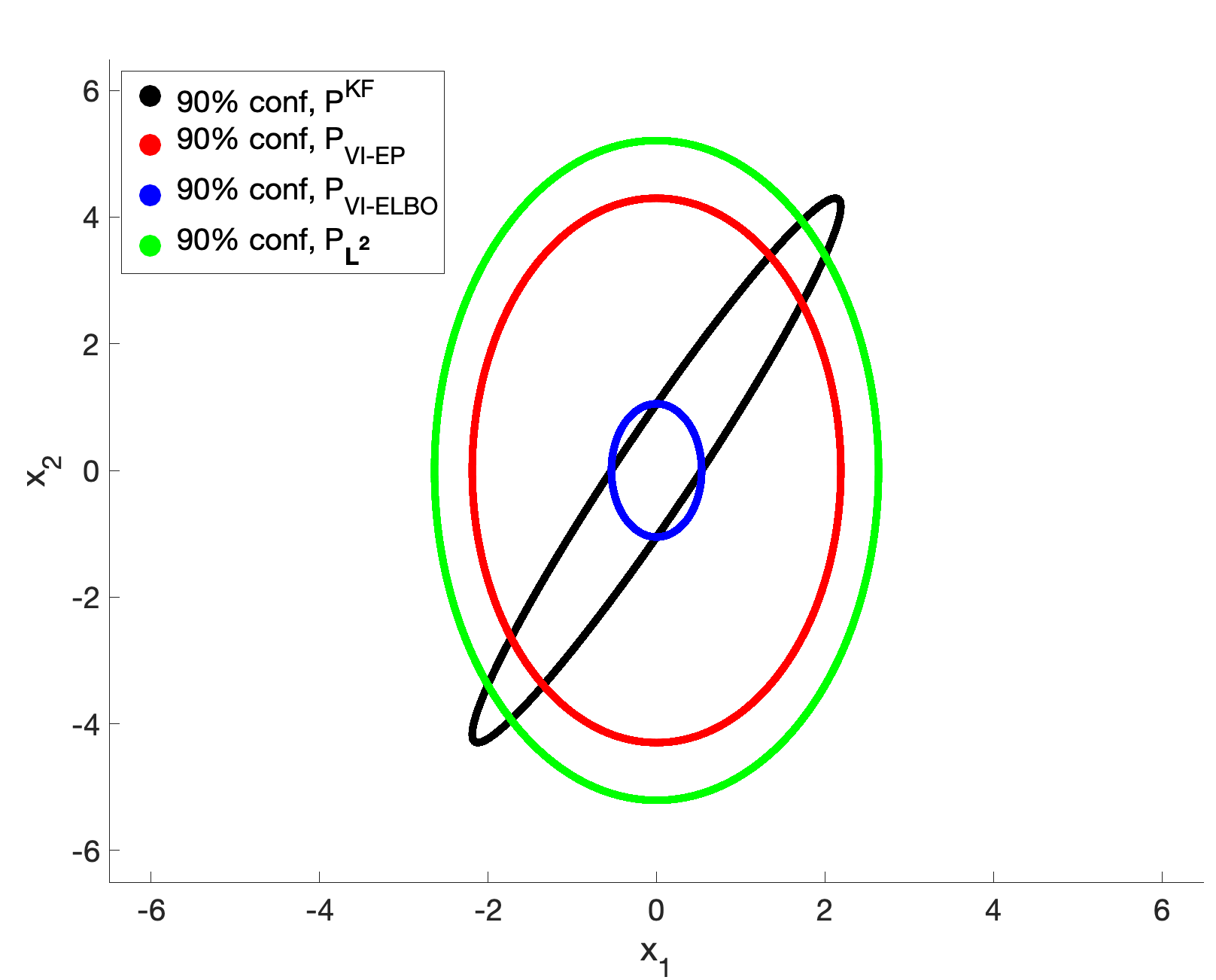}
   \caption{Comparing 90\% confidence level of true posterior (in black), traditional VI update based on KL divergence with EP (in red) and with the ELBO (in blue). VI with the information pseudometric update using $r=2$ (in green). The pseudometric optimum avoids suppressing probability in domains that have significant posterior density. }
   \label{inforpseudo}
\end{figure}
Motivated by this observation, we propose a filter, whose covariance update step is given by the following,

$\mathbb{L}^{r}$ Filter Update Step:
The diagonal posterior covariance $\vo{P}_t(\vo{d})$, is the outcome of the following information pseudometric optimization,
\begin{align}
\argmin_{\vo{d}_i} \int p(\vTh) \Big|\log \frac{p \left ( \vTh \right )}{q \left ( \vTh \mid \vo{d} \right )}\Big|^r d\vTh = \vo{d}^* \rightarrow \vo{P}_t(\vo{d}^*) \label{eq:therorem2}
\end{align}

For high-dimensional models, the diagonal structure of the $\vo{P}_t$ can severely constrain the movement of estimates towards the true values. Due to the lack of cross-correlation, information gain only occurs along standard basis dimensions, causing some estimates to become much more confident than others. To mitigate this issue, and enable each of the elements in the estimate of $\vTh_t$ to be confident, we draw motivation from the H$_{\infty}$ filter, where we increase robustness is exchanged for slower information gain. For the remainder of this work we will focus on the $\mathbb{L}^2$ formulation. For if $p,q$ are both n-dimensional multivariate Gaussian distributions with common mean $\vMu$ and but different covariances $\vo{\Sigma}_p$ and $\vo{\Sigma}_q=\text{diag}(\vo{d})$ respectively, then equation.~\eqref{eq:therorem2} for $r=2$ simplifies to,

\begin{subequations}
\begin{align}
\int p(\vTh) \Big|\log &\frac{p \left ( \vTh \right )}{q \left ( \vTh \mid \vo{d} \right )}\Big|^2 d\vTh = \frac{n}{2} - \trace{\vo{M}}  + \frac{\trace{\vo{M}^2}}{2} \nonumber \\
&+ \frac{1}{4} \left (\trace{\vo{M}} - \log |\vo{M}| - n \right )^2
\end{align}\label{eq:L2expo}%
\end{subequations}
where $\vo{M} = \vo{\Sigma}_q^{-1}(\vo{d})\vo{\Sigma}_p$.
With the $\vo{d}$ exposed in equations.~\eqref{eq:L2expo}, we optimize the $\mathbb{L}^2$ psuedometric with respect to $\vo{d}$ to calculate  $\vo{d}^*$. This objective and its gradients can be efficiently computed when $\vo{\Sigma}_p$ is diagonal plus low rank.


\section{Discrete-time $H_{\infty}$ filter}
We consider the same set of equations \eqref{dyn} and \eqref{meas}, and add that $\vTh_{t}$ itself is the signal of interest rather than its linear projection. The goal of the H$_{\infty}$ filter\cite{simon2006optimal} is to estimate $\vTh_{t}$, such that the cost function,
\begin{align}
    J = \frac{\sum \limits_{t=0}^{N-1}||\vTh_{t}-\hat{\vTh}_{t}||^2_{\vo{S}_t}}{||\vTh_{0}-\hat{\vTh}_{0}||^2_{\vo{P}_0^{-1}}+ \sum \limits_{t=0}^{N-1}\Big(||\vo{w}_{t}||^2_{\vo{Q}^{-1}}+||\vo{\eta}_{t}||^2_{\vo{R}^{-1}}\Big)}\label{hinfcost}
\end{align}
can be made less than $1/\gamma$, where $\vo{S}_t$ is a symmetric, positive definite matrix, chosen to be identity. The posterior covariance matrix of the state/parameter estimate is $\vo{P}_t$.  The estimator that achieves this is,
\begin{subequations}
\begin{align}
    \vo{P}_t &= \tilde{\vo{P}}_t[\vo{I}-\gamma \vo{S}_t\tilde{\vo{P}}_t + \vo{x}_t\vo{R}^{-1}\vo{x}_t^T\tilde{\vo{P}}_t]^{-1}\\
    \vo{K}_t &= \vo{P}_t\vo{x}_t\vo{R}^{-1}\label{eq:Khinf}\\
    \hat{\vTh}_{t} &= \tilde{\vTh}_{t} + \vo{K}_t(\vo{y}_t-\x_t^T\tilde{\vTh}_t)
\end{align}
\end{subequations}
where $\tilde{\vo{P}}_t$ is the prior covariance and $\hat{\vTh}_{t}$ is the posterior estimate from the prior $\tilde{\vTh}_{t}$. The condition, $\tilde{\vo{P}}_{t}^{-1} - \gamma \vo{I} + \vo{x}_t\vo{R}^{-1}\vo{x}_t^T \succ 0$ must hold for the estimator to provide solution to the $H_{\infty}$ problem.

\section{Augmented H$_{\infty}$ Filter Update Step}
The $\gamma$ parameter in the H$_{\infty}$ Filter determines the upper bound on the H$_{\infty}$ cost function in \eqref{hinfcost}. We seek to calculate $\gamma$ to minimize the discrepancy between the Kalman gains resulting from the regular Kalman update and the H$_{\infty}$ Kalman gain calculated after the VI or the $\mathbb{L}^2$ stage. Doing so we are ensuring the diagonal structure over the $\vo{P}_t$ while improving the robustness of the estimates. The proposed $\mathbb{L}^2$-VI H$_{\infty}$ filter algorithm is presented in algorithm.~\ref{algo:b}. The algorithm drastically simplifies for static parameter estimation for which $\vo{A}_t$ is identity and $\vo{w}_t=0$.




\begin{figure}
   \centering
   \includegraphics[width=0.5\textwidth]{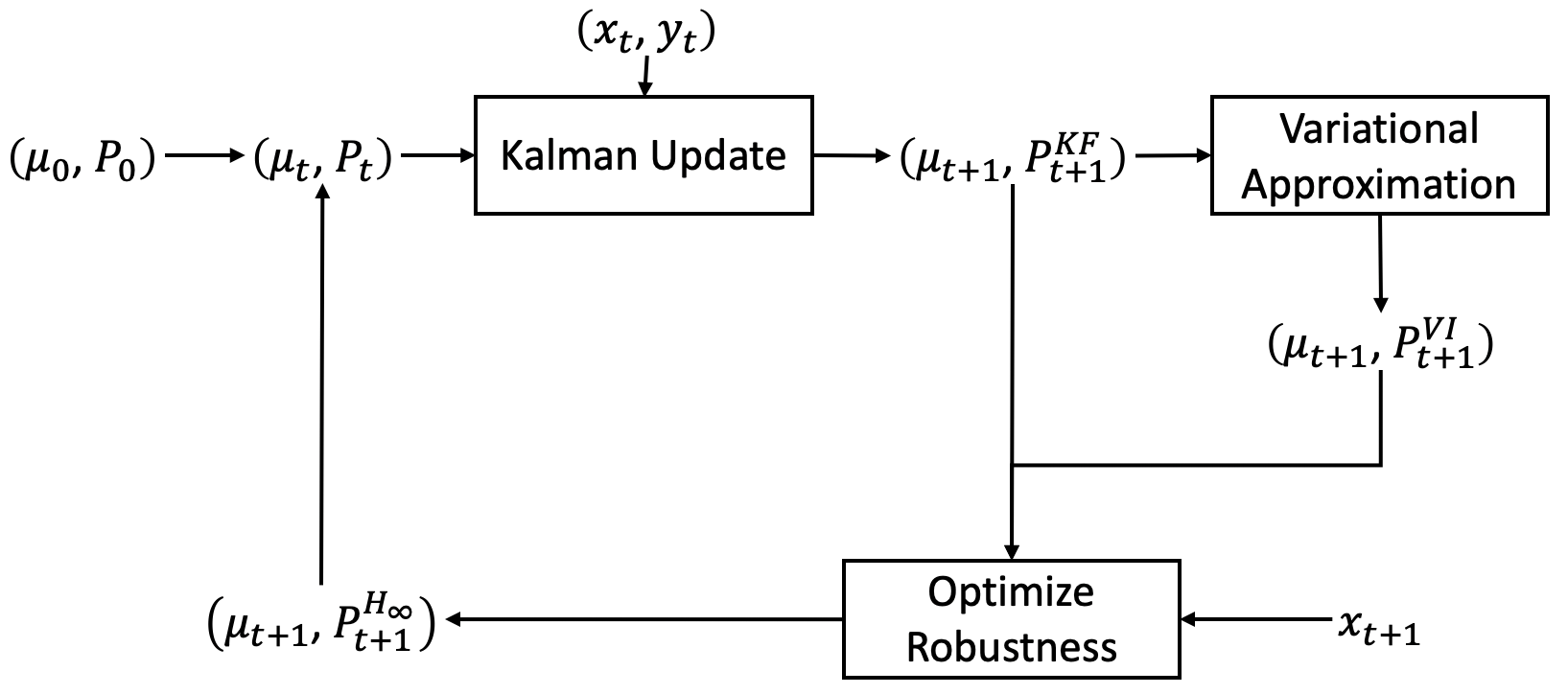}
   \caption{Illustration of a robust Variational Filter. The Filter is composed of three primary steps. First the Kalman update assimilates data using Bayesian Inference to compute the true update $(\mu_{t+1},\vo{P}^{\text{KF}}_{t+1})$. Then Variational Inference is used to find the best low memory approximation $\vo{P}^{VI}_{t+1}$ (using either EP or $\mathbb{L}^r$). Finally, comparing the true update and variational approximation a final update $\vo{P}^{H_\infty}_{t+1}$is optimized to robustly assimilate the next observation at $\vo{x}_{t+1}$}
   \label{figurelabel1}
\end{figure}

\begin{algorithm}
$\vo{P}^{H_{\infty}}_{t-1}$: \text{Diagonal posterior matrix at step (t-1)}
$\vo{P}^{H_{\infty}}_{t}$: \text{Diagonal posterior matrix at step t}
\renewcommand{\labelenumi}{(\Roman{enumi})}
 Calculate: 
 \begin{align}
  \vo{K}_t &= (\vo{A}_{t-1}\vo{P}^{H_{\infty}}_{t-1}\vo{A}_{t-1}^T+\vo{Q})\x_t\nonumber\\&\times\Big[\x_t^T(\vo{A}_{t-1}\vo{P}^{H_{\infty}}_{t-1}\vo{A}_{t-1}^T+\vo{Q})\x_t+\vo{R}\Big]^{-1}\nonumber\\
  \vo{P}_t^{\text{KF}} &= (\vo{I}-\vo{K}_t\vo{x}_t^T)(\vo{A}_{t-1}\vo{P}^{H_{\infty}}_{t-1}\vo{A}_{t-1}^T+\vo{Q})
\end{align}
\renewcommand{\labelenumi}{(\Roman{enumi})}

Optimization 1:
\begin{align}
 \vo{d}^{\mathbb{L}^r} = \argmin_{\vo{d}_t} \int p(\vTh \mid \vo{y}_t)\Big|\text{log}\frac{p(\vTh\mid \vo{y})}{q(\theta \mid \vo{d}_t)}\Big|^r d\vTh \rightarrow \vo{P}^{\mathbb{L}^r}_t(\vo{d})
\end{align}
\renewcommand{\labelenumi}{(\Roman{enumi})}

Propagate $\vo{P}^{\mathbb{L}^r}_t(\vo{d})$: $\tilde{\vo{P}}^{\mathbb{L}^r}_{t+1} = (\vo{A}_{t+1}\vo{P}^{\mathbb{L}^r}_t\vo{A}_{t+1}^T+\vo{Q})$

Compute the Kalman update for the next step using $\vo{P}_t^{\text{KF}}$:
\begin{align}
\vo{K}^{\text{KF}}_{t+1} &= (\vo{A}_{t}\vo{P}^{\text{KF}}_{t}\vo{A}_{t}^T+\vo{Q})\x_{t+1}\nonumber\\&\times\Big[\x_{t+1}^T(\vo{A}_{t}\vo{P}^{\text{KF}}_{t}\vo{A}_{t}^T+\vo{Q})\x_{t+1}+\vo{R}\Big]^{-1}
\end{align}
\renewcommand{\labelenumi}{(\Roman{enumi})}

Compute the expression for $\vo{K}^{\text{H}_{\infty}}_{t+1}$:
\begin{align}
    \vo{K}^{\text{H}_{\infty}}_{t+1} =& \tilde{\vo{P}}^{\mathbb{L}^r}_{t+1}[\vo{I}-\gamma \tilde{\vo{P}}^{\mathbb{L}^r}_{t+1} + \vo{x}_t\vo{R}^{-1}\vo{x}_t^T\tilde{\vo{P}}^{\mathbb{L}^r}_{t+1}]^{-1}\nonumber\\
    &\times\vo{x}_t\vo{R}^{-1}
\end{align}

Optimization 2:
\begin{align}
  \gamma^*_t &= \argmin_{\gamma} ||\vo{K}^{\text{H}_{\infty}}_{t+1}-\vo{K}^{\text{KF}}_{t+1}||_2 \nonumber\\
  \vo{P}^{H_{\infty}}_t  &= \vo{P}^{\mathbb{L}^r}_t(\vo{d})[\vo{I}-\gamma_t^* \vo{P}^{\mathbb{L}^r}_t(\vo{d}) + \vo{x}_t\vo{R}^{-1}\vo{x}_t^T\vo{P}^{\mathbb{L}^r}_t(\vo{d})]^{-1}
\end{align}





\caption{Algorithm of Augmented H$_{\infty}$ Filter Update}
\label{algo:b}
\end{algorithm}

When the Optimization step 1 is replaced by the EP objective, we recover the VI-H$_{\infty}$ filtering.
The fig.~\ref{figurelabel1} illustrates how this augmented filtering is performed. It is important to note that all these steps can be done in a computationally efficient manner, assuming $\vo{A}_t$ is sparse, without having to manipulate dense matrices. When we assume that $\vo{A}_t$ is the identity, as is the case for parameter estimation, computing is highly efficient, even when working with $\vo{P}_t^{\text{KF}}$ because it is a diagonal plus low-rank matrix.

\section{Results}
\begin{figure}
   \centering
   \includegraphics[width=0.5\textwidth]{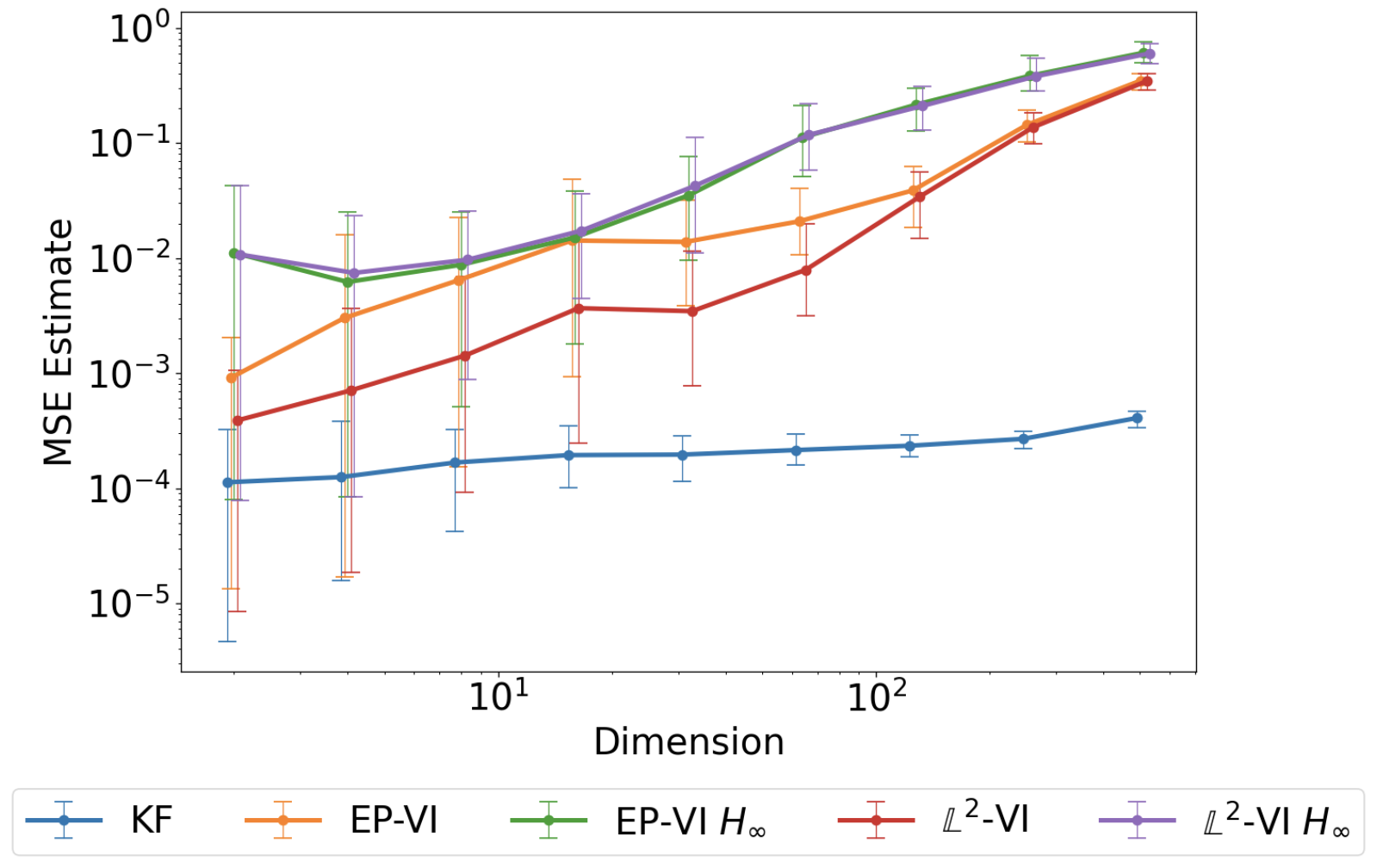}
   \caption{Mean Square Error for the five different filtering algorithms with varying problem dimensions after 1000 sequentially observed data points. The error bars correspond to 93\% confidence intervals. Two main observations are that 1) the $\mathbb{L}^2$ formulation of the VI problem outperforms standard EP VI and 2) the two $H_\infty$ filters under-perform, but as we will see are not overconfident so are more trustworthy. There is a small perturbation on the different filter's dimension coordinates to improve readability. }
   \label{figurelabel2}
\end{figure}
\begin{figure}
   \centering
   \includegraphics[width=0.5\textwidth]{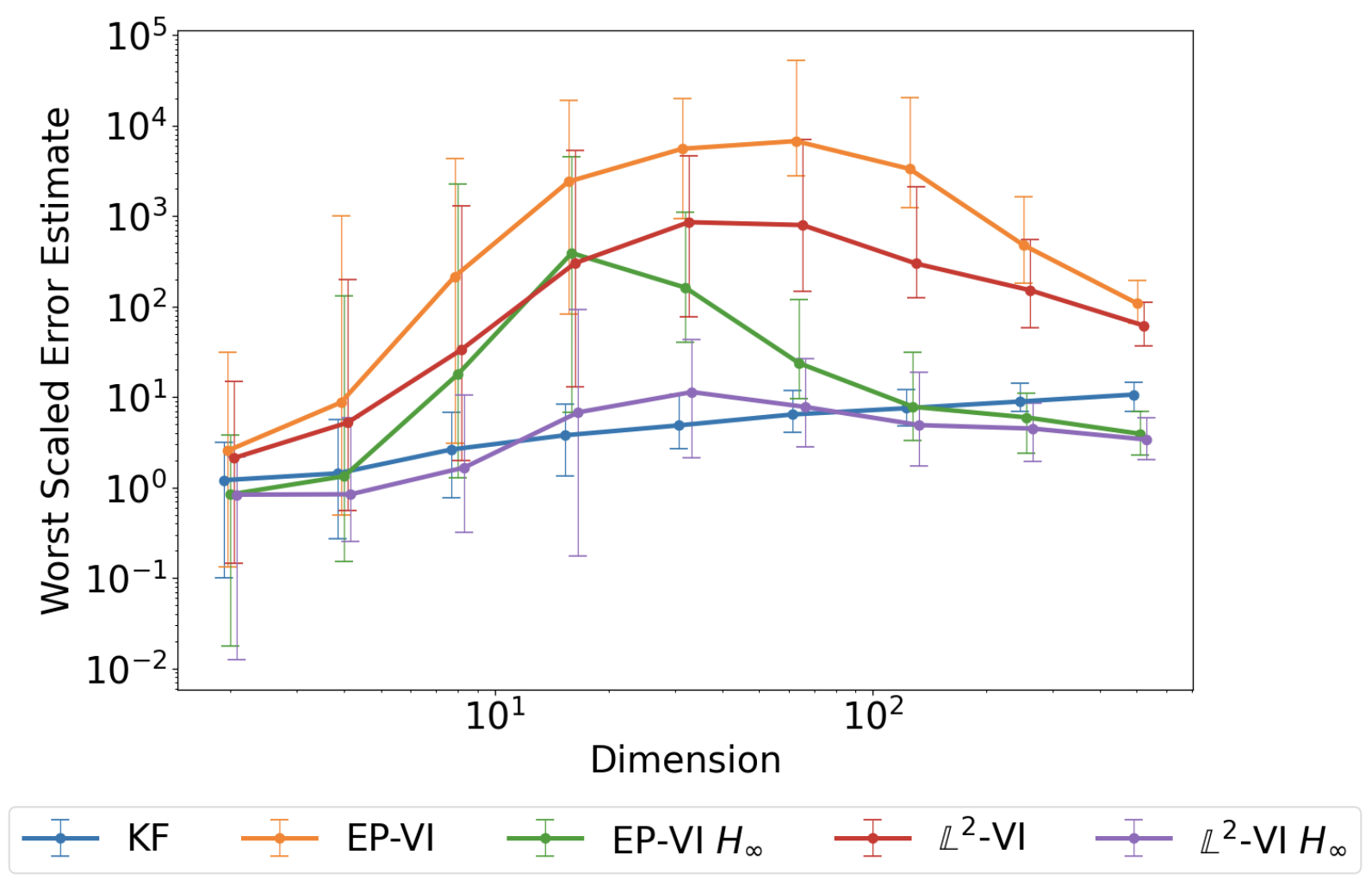}
   \caption{Worst Case Scaled Error (e.g. the largest absolute error divided by estimated standard deviation) for the five  filters. The error bars are 93\% confidence intervals. We see that 1) the variational filters without added robustness can be significantly overconfident 2) the two $H_\infty$ filters, particularly the $\mathbb{L}^2$ $H_\infty$ better capture the worst case estimate uncertainty. These methods trade robustness and lower bias for slower convergence. There is a small perturbation on the different filter's dimension coordinates to improve readability.}
   \label{figurelabel3}
\end{figure}
We consider comparing 5 different methods for parameter estimation. They are: the Kalman filter, the variational inference filter using the EP objective, the augmented VI-H$_{\infty}$ filter, the $\mathbb{L}^2$ filter, and the augmented $\mathbb{L}^2$-H$_{\infty}$ filter. Without loss of generality, we assume no system dynamics and no process noise, focusing only on the sequential update in the parameter estimate $\hat{\vTh_t}$, conditioned on the data $\vo{y}_t$. We consider the case when dimension of y is set to be 1, emulating our systems of interest, with much lower observation dimension compared with the parameter dimension. Our observation vector is $x_t \sim \mathcal{N} \left (\vXBar, 0.5 \vo{I}  \right)$. For each problem a $\vXBar$ was selected from $\mathcal{N} \left (0, \vo{I}  \right)$. The observations had additive measurement noise with variance $0.1$. The prior on $\theta_0$ was $\mathcal{N} \left (0, \vo{I} \right)$. The filter observes $1000$ time steps. This choice of problem is more difficult than if we had included process noise or a larger dimensional observation. Process noise allows estimation errors to become less important over time while larger dimensional observation spaces allow for the update to include more information without having to try to sequentially assimilate that same information over multiple observation steps as in our case.

In fig.~\ref{figurelabel2}, we compare the MSE estimate of the 5 filters with varying dimension of the problem size from 2 to 512 in powers of 2. The mean and range of MSE was estimated from 32 randomized problems for each dimension. The figure also shows the uncertainty in those estimates. The Kalman filter response with no imposed structure on the $\vo{P}_t$ matrix, performs the best of all, as expected since it is optimal. The availability of cross-correlation terms allows for the lowest MSE estimates. It is interesting to note that the information pseudo-metric based filter outperforms the traditional VI filter, with both having diagonal $\vo{P}_t$. The other two proposed augmented H$_{\infty}$ filters both under-perform with respect to the MSE estimates. However, it should be pointed out that the gap between the MSEs of the VI and $H_{\infty}$ filters seems to decrease respect to the increase in dimension of the problem. We further see that the MSE estimates fluctuates the least among them too. We assert that this will lead to better convergence characteristics of the posterior with increase in the problem size at the cost of time of convergence, which is suitable for many parameter estimation problems. The diagonal approximation of the $\vo{P}_t$ certainly introduces an error with respect the true posterior. In fig.~\ref{figurelabel3} we show the worst case scaled error performance of the 5 filters. The scaling with respect to the estimated standard deviation for each parameter and the worst performance was selected. When this metric is larger than the Kalman filter, then that filtering algorithm is unjustifiably confident in its estimate. Conversely when it is smaller then it is under confident. Among the filters with $\vo{P}_t$ diagonal, we see that $\mathbb{L}^2$-H$_{\infty}$ outperforms the rest. Without the H$_{\infty}$ correction the VI filter and $\mathbb{L}^2$ are expected to performing worse than the ones with H$_{\infty}$ correction, which is shown in the figure. This highlights that the $\mathbb{L}^2$-H$_{\infty}$ is very robust meaning that it's predictions are trustworthy.

\section{Conclusion}
The standard KF becomes exceedingly memory intensive as the dimension of the underlying state space increases. Variants of the KF have been proposed to reduce the dimension of the system, thus making implementation in high dimensions possible. The reduced rank KF project the state/parameter vector of the model onto a lower dimensional subspace. The success of this approach depend on meticulously choosing the reduction operator which may not be possible when the effective problem dimension is large.

In this paper, we propose a reduced memory filter based upon variational inference with a information psuedometric and H$_{\infty}$ filter in order to resolve the storage and computational issue related to the error covariance matrix while retaining robustness. With VI, we approximate the full rank $n\times n$ covarianace matrix, with a matrix characterized by $n$ elements along its diagonal. The storage and computational complexity of the operations is now $O\left(n\right)$. In order to test our proposed filters we consider test cases with increasing problem size to analyze its impact of our method for large dimensional problems. Our filters, specifically the $\mathbb{L}^2$-H$_{\infty}$ exhibits slower learning, but enables information update along all the directions of the state space, keeping the worst case performance better than other filters. In our future work we would like to analyze properties of our proposed filter and experiment with real-world high dimensional problems, particularly those emerging in Machine Learning.
\printbibliography
\end{document}